\title{Kissing to Find a Match:\\
Efficient Low-Rank Permutation Representation}
\author{%
  Hannah Dröge
    \\
  University of Siegen\\
  57076 Siegen, Germany \\
  \texttt{hannah.droege@uni-siegen.de} \\
  \And
  Zorah Lähner \\
  University of Siegen \\
  57076 Siegen, Germany \\
  \texttt{zorah.laehner@uni-siegen.de} \\
  \And
  Yuval Bahat \\
  Princeton University  \\
  Princeton, NJ 08544, United States \\
  \texttt{yuval.bahat@gmail.com} \\
  \And
  Onofre Martorell \\
  University of Balearic Islands \\
  Investigador ForInDoc del Govern de les Illes Balears \\
  07122 Palma, Illes Balears, Spain \\
  \texttt{o.martorell@uib.cat} \\
  \And
  Felix Heide \\
  Princeton University \\
  Princeton, NJ 08544, United States \\
  \texttt{fheide@cs.princeton.edu} \\
  \And
  Michael Möller \\
  University of Siegen\\
  57076 Siegen, Germany \\
  \texttt{michael.moeller@uni-siegen.de} \\
}
\newtheorem{proposition}{Proposition}
\newtheorem{definition}{Definition}
\newcommand{\kiss}[1]{\text{Kiss}(#1)}
\newcommand{\Perm}{\mathcal{P}}
\newcommand{\inA}{{V}}
\newcommand{\inB}{{W}}
\newcommand{\mata}{{V}} %
\newcommand{\matb}{{W}}
\newcommand{\perm}{P}
\DeclareMathOperator*{\argmax}{arg\,max}
\DeclareMathOperator*{\argmin}{arg\,min}
\newcommand{\errorTwo}{{e_{emb}}}
\newcommand{\errorOne}{{e_{prob}}}
\newcommand{\etal}{\textit{et al.~}}
\tikzset{%
  highlight/.style={rectangle,rounded corners,fill=red!15,draw,fill opacity=0.5,thick,inner sep=0pt}
}
\begin{document}

\maketitle

\begin{abstract}
Permutation matrices play a key role in matching and assignment problems across the fields, especially in computer vision and robotics. However, memory for explicitly representing permutation matrices grows quadratically with the size of the problem, prohibiting large problem instances. In this work, we propose to tackle the curse of dimensionality of large  permutation matrices by approximating them using low-rank matrix factorization, followed by a nonlinearity. To this end, we rely on the Kissing number theory to infer the minimal rank required for representing a permutation matrix of a given size, which is significantly smaller than the problem size. This leads to a drastic reduction in computation and memory costs, e.g., up to $3$ orders of magnitude less memory for a problem of size $n=20000$, represented using $8.4\times10^5$ elements in two small matrices instead of using a single huge matrix with $4\times 10^8$ elements. The proposed representation allows for accurate representations of large permutation matrices, which in turn enables handling large problems that would have been infeasible otherwise. We demonstrate the applicability and merits of the proposed approach through a series of experiments on a range of problems that involve predicting permutation matrices, from linear and quadratic assignment to shape matching problems.

\end{abstract}

\section{Introduction}
Permutation matrices, which encode the reordering of elements, arise naturally in any problem that can be phrased as a bijection between two equally sized sets. As such, they are
fundamental to many important computer vision applications, including matching semantically identical key points in images \cite{zanfir2018deep, wang2020combinatorial, wang2021neural,yu2020learning}, matching 3D shapes or point clouds \cite{kezurer15tight,vestner2017kernel,maron2016point}, estimating scene flow on point clouds \cite{puy2020flot} and solving jigsaw puzzles \cite{mena2018learning}, as well as to various sorting tasks~\cite{bernard2018ds,grover2019stochastic}.  

A permutation $p$, corresponding to the bijection from the set $\{1,\hdots,n\}$ onto itself, can be represented efficiently without a permutation matrix by merely enumerating the $n$ elements
\begin{align}
\label{eq:enumerationRepresentation}
    (p(1),~p(2), \hdots, p(n)) \in \mathbb{N}^n.
\end{align}
However, this representation is unsuitable for most computer vision problems that involve estimating $p$ through optimization since this representation (i) is inherently discrete, yielding combinatorial problems for which no natural relaxation exists, and (ii) induces a solution space with a meaningless distance metric, as element $i$ in the set generally is not 'closer' to element $i+1$ than it is to any other element $j$.

As a result, almost all methods for predicting permutations, including learning-based methods, favor a \textit{permutation matrix} representation instead, i.e., formulating a permutation as an element in the set 
\begin{align}
\label{eq:setOfPermutationMatrices}
    \Perm_n = \{ P \in \{0,1\}^{n \times n} ~|~ &\sum_{i}P_{ij} = 1,~\sum_{j}P_{ij} = 1 ~\forall i,j \},
\end{align}
with $p(i)=j$ in representation \eqref{eq:enumerationRepresentation} corresponding to $P_{ij}=1$ in the matrix representation form \eqref{eq:setOfPermutationMatrices}, which allows predicting $p$ via optimization methods.
\begin{figure}
    \centering
    \definecolor{blue}{rgb}{0.00000, 0.00000, 0.60000}%
\definecolor{red}{rgb}{0.00000, 0.60000, 0.00000}%
\definecolor{cc}{rgb}{0.60000, 0.00000, 0.00000}%
\definecolor{regalia}{rgb}{0.00000, 0.60000, 0.60000}%
\definecolor{cc}{rgb}{0.60000, 0.00000, 0.60000}%

\scalebox{0.85}{
\begin{tikzpicture}[scale=0.9]

\draw[black]  (0,0) circle (1);

\node (circle) at (0.55,1.7) {\color{red}$V_j$};
\node (circle) at (-0.55,1.7) {\color{blue}$V_i$};

\draw[->,black](1.0,0.0) -- (2.0,0.0);
\draw[->,red](0.4999999701976776,0.8660253882408142) -- (0.9999999403953552,1.7320507764816284);
\draw[dashed,red](0,0) -- (0.4999999701976776,0.8660253882408142) ;
\draw[->,blue](-0.5,0.8660253286361694) -- (-1.0,1.7320506572723389);
\draw[dashed,blue](0,0) -- (-0.5,0.8660253286361694) ;
\draw[->,black](-0.9999999403953552,-5.960464477539063e-08) -- (-1.9999998807907104,-1.1920928955078125e-07);
\draw[->,black](-0.49999988079071045,-0.8660253286361694) -- (-0.9999997615814209,-1.7320506572723389);
\draw[->,black](0.5,-0.8660252094268799) -- (1.0,-1.7320504188537598);

\draw[fill=gray!30] (0,0) -- ( 60 : 0.4 ) arc ( 60:120:0.4 )  node [above,pos=0.5] {$\alpha$};

  \node at (9.5,0.0) {
$ {2}\max{\left(
\begin{pNiceMatrix}
1.0 & 0.0\\
0.5 & 0.87\\
-0.5 & 0.87\\
-1.0 & -0.0\\
-0.5 & -0.87\\
0.5 & -0.87\\
\end{pNiceMatrix}
\cdot
\begin{pNiceMatrix}
-1.0 & -0.0\\
1.0 & 0.0\\
{\color{red} 0.5} & {\color{red} 0.87}\\
-0.5 & -0.87\\
0.5 & -0.87\\
{\color{blue} -0.5} &{\color{blue} 0.87}\\
\end{pNiceMatrix}^T - \frac{1}{2},0\right)}
= \begin{pNiceMatrix}
0& 1& 0& 0& 0& 0&\\
0& 0& {\color{red} 1}& 0& 0& {\color{regalia} 0}&\\
0& 0& {\color{regalia} 0}& 0& 0& {\color{blue} 1}&\\
1& 0& 0& 0& 0& 0&\\
0& 0& 0& 1& 0& 0&\\
0& 0& 0& 0& 1& 0&\\
\end{pNiceMatrix}$
  };
\end{tikzpicture}
}
    \caption{Geometric intuition behind our approach on a 2D unit sphere. For well-distributed vectors $V \in \mathbb{R}^{\kiss{2} \times 2}$,
    where the number of vectors is determined by the Kissing number ($\kiss{2} = 6$),
    the cosine angle between different vectors $V_{i,:}$ and $V_{j,:}$, $i \neq j$, is $\langle V_{i,:}, V_{j,:}\rangle = \cos(\alpha) \leq 0.5$, while $\langle V_{i,:}, V_{i,:}\rangle = 1$ for the same vector. 
Thus, for any permutation $P$, the matrix-matrix product of $V$ and $(PV)^T$ merely has to be thresholded suitably to represent the permutation $P$, i.e. $P = 2\max(V(PV)^T - 0.5,0)$.} 
    \label{fig:teaser}
\end{figure}
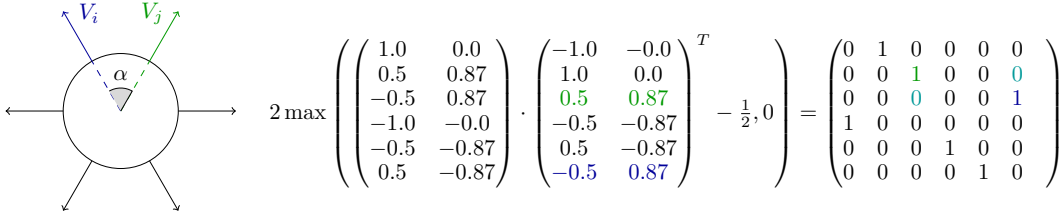
Yet, the advantages of the matrix form representation \eqref{eq:setOfPermutationMatrices} come at the cost of a prohibitive increase in memory, as it requires storing $n^2$ binary numbers $P_{ij}\in \{0,1\}$, or -- after commonly used relaxations -- even $n^2$-many floating point numbers instead of the $n$ integers in \eqref{eq:enumerationRepresentation}. This renders matching problems with $n>~\sim10^4$ largely infeasible as their corresponding permutation matrix $P$ constitutes over one hundred million entries.
To handle large matrices whose size prohibits explicit processing and storage, existing approaches typically either turn to \textit{sparse representations}, i.e., storing only a small portion of matrix values in the form of $(i,j,P_{i,j})$ triplets, where $P_{i,j}\neq0$, or employ \textit{low rank representations}, i.e., forming a large matrix $P$ as a product of matrices
\begin{align}
    \label{eq:lowRank}
    P = \mata \matb^T,
\end{align}
with $\mata,\matb \in \mathbb{R}^{n\times m}$ and $m<<n$.
Unfortunately, neither of these approaches is applicable to permutation matrices: sparse representation cannot be used as the sparsity pattern is not only unknown a-priori but actually the sought-after solution to the problem. On top of that, since permutation matrices are by definition full rank, a low-rank representation \eqref{eq:lowRank} can yield only a crude approximation at best. %

\paragraph*{Contributions. } In this work, we alleviate the limitation on problem size by harnessing the well-studied problem of (bounds for) the so-called \textit{Kissing number}, which, in practice, translates to introducing a simple adaptation to the matrix factorization approach \eqref{eq:lowRank}. In particular, we exploit the fact that for row-normalized matrices $\mata$ and $\matb$, the entries of $\mata \matb^T$ correspond to the cosines of the angles between the matrix rows. We then apply a pointwise non-linearity on the product of the matrices in \eqref{eq:lowRank}, which allows representing any permutation while using $m<<n$. We use the Kissing number theory to provide an estimate for how small an $m$ we can use. We elaborate on these theoretical considerations in Section \ref{sec:method} and provide an illustration of the geometric intuition for our approach in Fig.~\ref{fig:teaser}.

We then demonstrate the applicability of the proposed approach through several numerical experiments tackling various problems that involve estimating permutations, including a study on point alignment, linear assignment problems (LAPs), quadratic assignment problems (QAPs), and a real-world shape matching application. 
We find that the proposed approach trades off only little accuracy to offer significant memory saving, thus enabling handling bijection mapping problems that are %
larger than was previously possible, when full permutation matrices had to be stored.

\section{Related Work}\label{sec:soa}
\subsection{Permutation Learning and Representation}
Permutation learning aims to develop a model capable of predicting the optimal permutation matrix that matches a given input data or label.
Previous studies have suggested relaxing permutation matrices to continuous domains, such as the Birkhoff polytope \cite{birkhoff1946tres}, to approximate solutions for these optimization problems but still faced the problem of enforcing the sum-to-one constraints, which are commonly approximated using Sinkhorn layers \cite{prescott2011ranking}. 
In 2011, Adams \etal \cite{prescott2011ranking} proposed the Sinkhorn propagation method, with the goal of iteratively learning doubly-stochastic matrices that allow the prediction of permutation matrices. %
Following the same research line, Cruz \etal \cite{santa2017deeppermnet} proposed Sinkhorn networks, followed by
 Mena \etal \cite{mena2018learning} proposing Gumble-Sinkhorn networks, adding Gumbel noise to the Sinkhorn operator to learn latent permutations.
Grover \etal \cite{grover2019stochastic} made further efforts in the relaxation of a learned permutation matrix and propose to relax the matrix to become unimodal row-stochastic, meaning that the rows must maintain a total sum of one while having a distinct argmax. More recent studies suggest to circumvent the constraint of row and column sums being one by predicting permutations in Lehmer code, whose matrix form results to be row-stochastic \cite{diallo2020permutation}.
Other works propose alternative representations of permutations in different domains, as to work in the Fourier domain to give a compact representations of distribution over permutations \cite{huang2009fourier,pmlr-v2-kondor07a}, or embed them to the surface of a hypersphere \cite{plis2011directional}.

\subsection{Assignment Problems}

The goal of assignment problems is to find a permutation between two ordered sets while minimizing the assignment costs. 
The two most common versions are the \emph{linear} and \emph{quadratic} assignment problem (LAP and QAP) 
which are based on element-wise and pair-wise costs, respectively.
An LAP can be solved in cubic time using the Hungarian algorithm \cite{kuhn1955hungarian}.
The QAP was first introduced by Koopmans and Beckmann \cite{koopmansbeckmann} and can be written as
\begin{align}
    \min_{P \in \mathcal{P}_n} \text{tr}(A P B P^\top) + \text{tr}(C^\top P),
\end{align}
for  $A,B \in \mathbb{R}^{n \times n}$ and $C \in \mathbb{R}^n$ being problem
where $A \in \mathbb{R}^{n \times n}$ is the cost function between elements of the first object to match, $B \in \mathbb{R}^{n \times n}$ is the distance function between elements in the second object and $C \in \mathbb{R}^n$ is a point-wise energy.  The QAP has been proved to be NP-hard so no polynomial time solution can be expected for general cases.
As a result, many relaxations of the problem exist, for example by relaxing the permutation constraint \cite{gold1996graduated,rodola2012gametheoretic}, or by lifting the problem to higher dimensions \cite{kezurer15tight, zhao1998semidefinite}. 
A survey on various relaxation approaches can be found in \cite{loiola2007survey}. 
While the relaxations do ease some aspects of the problems, they normally do not decrease the dimensionality of the problem which remains demanding for large $n$. 

\subsection{3D Shape Correspondence}

3D shape correspondence is also often posed as an assignment problem between the sets of vertices of a pair of shapes, for example through point descriptors matched by an LAP or in an QAP aiming to preserve distances between all point pairs.
However, 3D shapes are often discretized with thousands of vertices which makes optimization for a permutation computationally challenging. 
Hence, the permutation constraint is often relaxed \cite{rodola2012gametheoretic} and, even though the tightness of relaxation might be known \cite{bernard2018ds,Dym:2017ue}, the optimization variables still scale quadratically with the problem size.
In \cite{gao2021multi} and \cite{vestner2017kernel}  the QAP is deconstructed into smaller problems and then each of them is optimized with a series of LAPs, while \cite{seelbach2021qmatch} solve for permutations as a series of cycles that gradually improve the solution.

Because permutation constraints for large resolution become infeasible, and, hence, the restriction to cases with the same number of vertices, recent methods often do not impose these constraints at all.
The functional maps framework \cite{ovsjanikov2012functional} converts the output to a mapping between functions on the shapes instead of vertices and can reduce the dimensionality drastically by using the isometry invariance property of the eigenfunctions of the Laplace-Beltrami operator \cite{pinkall1993}. 
Other lines of work rely on a given template to constrain the solution \cite{groueix2018b,sundararaman2022implicit}, impose physical models that regularizes the deformation between inputs \cite{eisenberger2021neuromorph, eisenberger2019divfree, papazov2011deformable}, or learn a solution space through training \cite{cao2023selfsupervised, litany2017deepfm,marin2020correspondence}.
However, with the exception of template-based models, these cannot guarantee permutations.

\section{Low-Rank Permutation Matrix Representation} \label{sec:method}
A common approach to solve optimization problems with costs $E$ over the set of permutation matrices $\mathcal{\perm}_n$ (including those arising from training neural networks for predicting assignments) is to relax the problem by replacing $\mathcal{\perm}_n$ by its convex hull $\text{conv}(\mathcal{\perm}_n)$, i.e., the set of doubly-stochastic matrices: 
\begin{equation}
    \label{eq:energyMin}
    \min_{P\in \text{conv}(\mathcal{\perm}_n)} E(P). 
\end{equation}
Since $P$ grows quadratically in $n$, has an unknown sparsity pattern, and the true solution is always full rank, such problems pose significant challenges for large $n$. In this work, we make the interesting observation that a non-linearity as simple as a rectified linear unit (ReLU, denoted by $\sigma$) is sufficient not only to restore a full rank, but to represent any permutation matrix exactly. More precisely, we propose to 
replace the set $\text{conv}(\mathcal{\perm}_n)$ in \eqref{eq:energyMin} with the set 
\mbox{$
 \mathcal{K}_m(\mathcal{\perm}_n) =\{\sigma(2VW^T-1) ~|~ V,W \in \mathbb{R}^{n\times m}\} 
$}
and use the so-called \textit{Kissing number} \cite{boyvalenkov2015survey,musin2008kissing,zong1998kissing} to show that $\mathcal{\perm}_n \subset \mathcal{K}_m(\mathcal{\perm}_n)$ for a surprisingly small $m$. 
Let us first formalize our approach by defining the \textit{Kissing number}:
\begin{definition}
For a given $m \in \mathbb{N}$, we define the \textit{Kissing number} $\kiss{m}$ as
\begin{equation}
\begin{aligned}
\label{eq:kissing}
     \kiss{m} := & \max_n \{ n\in \mathbb{N}~|~ \exists A \in \mathbb{R}^{n \times m},
     ~ \|A_{i,:}\|_2=1, \ 2\langle A_{i:},A_{j,:}\rangle \leq 1,~i\neq j \}.
\end{aligned}
\end{equation}
\end{definition}

Note that the Kissing number can be interpreted geometrically as the maximum number of points that can be distributed on an $m$-dimensional unit sphere such that the angle formed between each pair of different points is at least $\arccos(0.5)$. This property quickly establishes $\mathcal{\perm}_n \subset \mathcal{K}_m(\mathcal{\perm}_n)$ : 
\begin{proposition}\label{prop:relu}
Let $\perm \in \Perm_n$ be an arbitrary permutation matrix, and let $\sigma$, $\sigma(x)=\max(x,0)$ denote a rectified linear unit (ReLU). Then for every $m$ such that
$n\leq \kiss{m}$  there exist $\inA,\inB \in \mathbb{R}^{n\times m} $ such that 
\begin{equation} \label{eq:prop1}
   \perm =\sigma(2\inA \inB^T-1). 
\end{equation}
\end{proposition}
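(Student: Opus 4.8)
The plan is to exhibit the two factors explicitly: build $V$ from a Kissing configuration and build $W$ from the permutation $P$ itself. First I would invoke the definition of $\kiss{m}$ in \eqref{eq:kissing}: since $n\le \kiss{m}$, there exists a matrix $A\in\mathbb{R}^{n\times m}$ whose rows $A_{i,:}$ are unit vectors satisfying $2\langle A_{i,:},A_{j,:}\rangle\le 1$ for all $i\neq j$ (when $n$ is strictly below $\kiss{m}$, one simply keeps any $n$ of the $\kiss{m}$ maximizing vectors, which still obey the same two constraints). I would then set $V:=A$ and let $W$ be the row permutation of $A$ induced by $P$, i.e. the matrix with $W_{j,:}=A_{p^{-1}(j),:}$, which is exactly $W=P^\top A$; in particular the rows of $W$ are precisely the rows of $V$, merely reordered.

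With this choice the entries of $VW^\top$ are inner products of rows of $A$. Writing $p$ for the bijection associated with $P$ (so $P_{ij}=1\iff p(i)=j$), we get
\[
 (VW^\top)_{ij}=\langle A_{i,:},A_{p^{-1}(j),:}\rangle .
\]
The core of the argument is then a two-case analysis. If $p(i)=j$, the two indices coincide and the inner product equals $\|A_{i,:}\|_2^2=1$; if $p(i)\neq j$, the indices differ and the Kissing constraint yields $\langle A_{i,:},A_{p^{-1}(j),:}\rangle\le\tfrac12$. Applying the affine map $x\mapsto 2x-1$ sends the first case to $2\cdot 1-1=1$ and the second case to a value $\le 2\cdot\tfrac12-1=0$. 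Applying $\sigma(x)=\max(x,0)$ finally leaves the value $1$ unchanged and collapses every nonpositive entry to exactly $0$. Hence $\sigma(2VW^\top-1)$ carries a $1$ at precisely the positions where $p(i)=j$ and a $0$ elsewhere, which is exactly $P$, establishing \eqref{eq:prop1}.

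The only genuinely delicate points, and the ones I would double-check, are the transpose/permutation bookkeeping and the exactness of the thresholding. For the former I must choose $W=P^\top A$ rather than $PA$ so that the unit entries land on the support of $P$ and not on that of $P^\top$. For the latter, the construction succeeds only because the off-support inner products are bounded by the specific constant $\tfrac12$ — which is exactly what makes $2x-1$ nonpositive there — and because the on-support inner products are exactly $1$, owing to the rows being unit vectors. In other words, the factor $2$ and the shift $1$ in \eqref{eq:prop1} are not arbitrary but are tuned precisely to the $\tfrac12$ separation guaranteed by the Kissing number. I expect no obstacle beyond this calibration, since the existence of the required vector configuration is handed to us directly by \eqref{eq:kissing}.
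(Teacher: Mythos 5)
Your proof is correct and follows essentially the same route as the paper's: take $V$ to be a kissing configuration guaranteed by \eqref{eq:kissing}, let $W$ be its rows reordered according to the permutation, observe that on-support inner products equal $1$ while off-support ones are at most $\tfrac12$, and conclude that $x \mapsto \sigma(2x-1)$ thresholds $VW^T$ exactly to the permutation matrix. Your transpose bookkeeping is in fact the more careful one: the paper sets $W = \perm V$, which under its convention $\perm_{ij}=1 \iff p(i)=j$ actually yields $\sigma(2VW^T-1) = \perm^T$ rather than $\perm$ (a harmless slip for the existence claim, since the construction can be applied to $\perm^T$), whereas your choice $W = \perm^T V$ places the ones exactly on the support of $\perm$.
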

\begin{proof}
Let $\inA \in \mathbb{R}^{n\times m}$ be a matrix that satisfies the equalities and inequalities of \eqref{eq:kissing}, and let $\inB = \perm\inA$. Then it holds that 
\begin{align}
    \label{eq:reluRepresentation}
2\langle \inA_{i,:}, \inB_{j,:} \rangle \begin{cases}
\leq 1 & \text{ if } \perm_{i,j} \neq 1\\
=2 & \text{ otherwise}
\end{cases}.
\end{align}
Consequently 
\begin{align}
    \sigma(2\langle \inA_{i,:}, \inB_{j,:} \rangle-1)= \begin{cases}
0 & \text{ if } \perm_{i,j} \neq 1\\
1 & \text{ otherwise}
\end{cases},
\end{align}
which proves the assertion. 
\end{proof}

To determine the minimal rank $m$ that is required for representing a permutation of $n$ elements, we rely on extensive studies in the past few decades which computed either exact values or lower and upper bounds for different values of $m$ \cite{caluza2018improving}.

Using $\perm = \sigma(2VW^T-1)$ for relaxing \eqref{eq:energyMin} yields a relaxation that requires only $2mn$ instead of $n^2$ parameters, with $m<<n=\kiss{m}$.
For instance, $\kiss{24}=196560$ implies that matrices of rank $m=24$ are sufficient for representing any arbitrary permutation matrix of up to $n=196560$ elements, thus requiring $\sim4000$ times less storage memory: $2\cdot24\cdot196560$ instead of $196560^2$ parameters.
Furthermore, $\mathcal{\perm}_n \subset \mathcal{K}_m(\mathcal{\perm}_n)$ ensures that -- in stark contrast to direct low-rank factorization -- \textit{any} permutation matrix can still be represented \textit{exactly}.  
Empirically, the optimization over  parametrizations $\sigma(2VW^T-1)$ turned out to cause significant challenges, likely due to the non-convexity and non-smoothness of the problem. To alleviate this problem, we resort to a smoother version of~\eqref{eq:prop1} which can still approximate permutations to an arbitrary desired accuracy:

\begin{proposition}\label{prop:softmax}
    Let $\perm \in \Perm_n$ and $g$ denote an arbitrary permutation matrix and an arbitrary entry-wise strictly monotonically increasing function, respectively, and let $s$ denote the row-wise Softmax function 
    $s(A)_{i,j}= \frac{\exp{A_{i,j}}}{\sum_k \exp{A_{i,k}}}$.
    Then $\forall n\leq \kiss{m}$  and $\forall\epsilon>0$ there exist $\inA,\inB \in \mathbb{R}^{n\times m}$ and $\alpha>0$, such that 
    $$
    \|\perm -s\left(\alpha g(\inA \inB^T)\right)\|\leq \epsilon.
    $$
\end{proposition}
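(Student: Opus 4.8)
The plan is to build directly on the construction from Proposition~\ref{prop:relu} and then exploit that a temperature-scaled row-wise Softmax converges to a row-wise argmax indicator. First I would take the same $\inA \in \mathbb{R}^{n\times m}$ realizing the Kissing configuration and set $\inB = \perm\inA$, so that the product $M := \inA\inB^T$ inherits the two-valued structure established in \eqref{eq:reluRepresentation}: in each row $i$, the single entry $M_{i,j}$ with $\perm_{i,j}=1$ equals $1$, while every other entry satisfies $M_{i,k}\leq \tfrac12$.

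Next I would apply $g$ entrywise. Since $g$ is strictly monotonically increasing and $1 > \tfrac12$, in every row the correct entry becomes the \emph{unique} strict maximum, separated from all competitors by a uniform positive gap
\[
\delta := g(1) - g(\tfrac12) > 0,
\]
independent of the row index. This is the only place where strict monotonicity is essential, and the uniformity of $\delta$ across rows is exactly what will later allow a single $\alpha$ to work simultaneously for all rows.

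I would then analyze $s(\alpha g(M))$ row by row. For an off-diagonal target ($\perm_{i,j}=0$), bounding the numerator by $\exp(\alpha g(M_{i,j}))$ and the denominator from below by the winning term $\exp(\alpha g(1))$ gives
\[
s(\alpha g(M))_{i,j} \leq \exp\!\big(\alpha\,(g(M_{i,j})-g(1))\big) \leq e^{-\alpha\delta}.
\]
For the correct entry, summing these bounds over the row yields $1 - s(\alpha g(M))_{i,j} = \sum_{k\neq j} s(\alpha g(M))_{i,k} \leq (n-1)\,e^{-\alpha\delta}$. Hence every entry of $\perm - s(\alpha g(M))$ is bounded in absolute value by $(n-1)\,e^{-\alpha\delta}$.

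Finally, because $n$ is fixed and all norms on $\mathbb{R}^{n\times n}$ are equivalent, it suffices to drive the entrywise error to zero: choosing $\alpha$ large enough that $(n-1)\,e^{-\alpha\delta}$, times the relevant norm-dependent constant, falls below $\epsilon$ completes the argument. I do not expect a genuine obstacle here; the only points demanding care are confirming that the maximum in each row is strict with a gap $\delta$ that does not depend on the row (so that one temperature suffices for all rows at once), and observing that leaving the matrix norm unspecified is harmless in finite dimensions.
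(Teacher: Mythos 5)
Your proof is correct and takes essentially the same route as the paper's: choose $V$ from the Kissing configuration, set $W=PV$, use the strict monotonicity of $g$ to make the target entry the unique row-wise maximum, and let the softmax temperature $\alpha$ grow. The only difference is that you make the convergence quantitative via the uniform gap $\delta = g(1)-g(\tfrac12)$ and the explicit bound $(n-1)e^{-\alpha\delta}$, whereas the paper simply invokes the limiting softmax-to-argmax property; this is a tightening of the same argument rather than a different one.
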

\begin{proof}
    Similar to the proof in Proposition~\ref{prop:relu}, we start by choosing $\inA$ satisfying \eqref{eq:kissing} and setting $\inB=\perm\inA$ to obtain
    $$ (\inA \inB^T)_{ij} = \langle \inA_{i,:}, \inB_{j,:} \rangle  \begin{cases}
        = 1 & \text{ if } \perm_{i,j} =1 \\
        \leq 0.5 & \text{otherwise}
    \end{cases}.$$
    Then $\forall i,j,k$ s.t. $\perm_{ij}=1$ and $k \neq j$ (i.e., $\perm_{ik}=0$) it holds that \mbox{$g(\inA \inB^T)_{ij}> g(\inA \inB^T)_{ik}$}. 
    Finally, to yield the assertion we use the Softmax property of converging to the unit vector in the limit $s(\alpha A_{i,:})\stackrel{\alpha \rightarrow \infty}{\rightarrow} e_j$ (with $j=\argmax{A_{i,:}}$), by taking $\alpha>0$ to be large enough.
\end{proof}
In practice, we use $g(x) = 2x$, in accordance with the representation in~\eqref{eq:prop1}.
We use this smoother version to validate the proposed low-rank representation for handling large matching problems in the experiments we report next.

\section{Experiments}
The following experiments validate our efficient permutation estimation method for different applications, and they confirm the ability to scale to very large problem sizes. 
First, as a proof of concept, we demonstrate our approach on the application of point cloud alignment for the two non-linearities proposed in Section \ref{sec:method} and introduce our sparse training technique.
We then valdidate the effectiveness of our approach in the context of linear assignment problems and show how to handle sparse cost matrices.
We perform further experiments in the context of generic NP-hard quadratic assignment problems, and integrate our approach into a state-of-the-art shape matching pipeline, thus providing the same level of accuracy while enabling a higher spatial resolution.

\subsection{Implementation Details} \label{sec:impl_details}

We use the PyTorch Adam optimizer \cite{kingma2015adam} with its default hyperparameters in all our experiments. 

\paragraph{Stochastic Optimization.} 
Fully benefiting from our proposed compact representation
requires the costs $E$ (or an approximation thereof) to be evaluated \textit{without ever forming the full (approximate) permutation matrix}, as this step would inherently return to necessitate $n^2$ many entries. 
To this end, we introduce the concept of \textit{stochastic optimization}, which -- for our softmax-based representation $s(2\alpha VW^T)$ arising from Proposition \ref{prop:softmax} -- is not a stochastic training in a classical sense: we propose to fix all but two entries in each row of our approximate permutation. 
More specifically, in any supervised (learning-based) scenario where it is known that the $y_i$-entry of the $i$-the row of the final permutation $P$ ought to be equal to one, each step of our optimizers merely computes the $y_i$-th and one randomly chosen ($r_i$-th entry) of each row, and computes the softmax $s$ on these two entries only, i.e., 
\begin{equation}
    P_{i,[y_i,r_i]} = s(2\alpha V_{i,:} (W_{[y_i,r_i],:})^T),
\end{equation}
while implicitly assuming $P_{i,j}=0$ for $j \notin \{y_i,r_i\}$. 

In the above, we used $W_{[y_i,r_i],:}$ to denote the $2 \times m$ matrix consisting of the $y_i$-th and the $r_i$-th row of $W$. Our stochastic approach requires the computation of 2$n$ entries per gradient descent iteration only and -- by randomly choosing the $r_i$ -- manages to still approximate the desired objective well. 

\paragraph{Normalization of $V$ and $W$.} Since propositions \ref{prop:relu} and \ref{prop:softmax} rely on row-normalized matrices, 
we explicitly enforce this constraint whenever we compute $\perm$, by using \mbox{$V_{i,:} \leftarrow \frac{1}{\|V_{i,:}\|}V_{i,:}$}, \mbox{$W_{i,:} \leftarrow \frac{1}{\|W_{i,:}\|}W_{i,:}$}.
We omit this step from the presentations below for the sake of readability

\paragraph{Softmax Temperature.} Since the values of $\langle V_{i,:}, W_{j,:}\rangle$ are bounded by one following the aforementioned normalization, the \textit{temperature} parameter $\alpha$ determines the trade-off between approximating a hard maximum (as required for accurately representing permutations, see Prop.\ref{prop:softmax}) and favorable optimization properties (i.e., meaningful gradients). We specify the schedule (constant or monotonically increasing) in each of the experiments below.

\subsection{Point Cloud Alignment}
\label{sec:pointcloud}

As a proof of concept, we demonstrate that our proposition is correct and the optimization process converges. We explore the different choices of non-linearity, starting with ReLU and continuing with Softmax, using the task of predicting a linear transformation over point clouds. 
In this task we aim to match a point cloud $X_1 \in \mathbb{R}^{n \times m}$ consisting of $n$ $m$-dimensional points, uniformly distributed on the unit hyper-sphere, to its linearly transformed and randomly permuted version $X_2 \in \mathbb{R}^{n \times m}$.
To obtain $X_2$ we multiply $X_1$ by a randomly drawn matrix $\Theta_\text{GT} \in \mathbb{R}^{m\times m}$
and apply a random permutation.
Then, we optimize over the estimated transformation matrix $ \Theta$ which in this experiment defines our permutation matrix $P(\Theta)$:
\begin{equation}
    P(\Theta) = \sigma\left(2 \inA \inB(\Theta)^T - 1 \right).
\end{equation}
Note that in this case, our representation in \eqref{eq:prop1} is fully parameterized by $\Theta$, with $\inA=X_1$ and $\inB(\Theta) = X_2\Theta$, and $V$ and $W$ are row-wise normalized in each iteration.
Here $P(\Theta)$ is equal to the correct permutation if the matrix $\Theta$ correctly aligns the point clouds, i.e. minimizes the angle between two corresponding points in $\inA$ and $\inB(\Theta)$ while maximizing the angles between non-corresponding points. %

We solve for the permutation by performing $20000$ minimizing steps with a learning rate set to $0.01$ over the negative log-likelihood loss
\begin{equation}\label{eq:pointcloud}
    \hat{\Theta}=\argmin_{\Theta}~-\frac{1}{n}\sum_{i=1}^n \log\bigl(P(\Theta)_{i,y_i}\bigr),
\end{equation}
where $y_i$ is the index of the point in $X_2$ which corresponds to the $i^\text{th}$ point in $X_1$.
We experiment with different numbers of points $n$, each time choosing the dimension $m$ to be just big enough to satisfy the Kissing number constraint from Proposition~\ref{prop:relu}, i.e., $\kiss{m} \geq n > \kiss{m-1}$.

To check that we were able to find the correct transformation matrix $\Theta$ -- and therefore the correct permutation matrix $\perm$ -- through optimization, we verify that the nearest neighbor (closest point) for each row $i$ in $V$ is located in row $j$ of matrix $W$ that satisfies $\perm_{i,j}=1$. We find that this is indeed the case in all experiments with different number of points 
$n\in\{10,100,1000,10000\}$, thus establishing that we could reach the correct representation through optimization.
We achieve equally good results when replacing the point-wise non-linearity ReLU with Softmax $P(\Theta) = s\left(2 \alpha \inA \inB(\Theta)^T\right)$.

Due to the quadratically growing size of the permutation matrix with an increasing number of points, we further propose to optimize for the permutation matrix stochastically, as described in Section \ref{sec:impl_details}. We ran experiments with similar settings as above, wherein we gradually increased the value of the temperature parameter $\alpha$ linearly during optimization from $\alpha=5\cdot10^{-5}$ to $\alpha=1000$. In these experiments, we again found that each point was paired with its corresponding nearest neighbor, while reducing the memory consumption, as shown in Fig.\;\ref{fig:point_registration}.

\subsection{Linear Assignment Problems}
We next validate our method on balanced Linear Assignment Problems (LAPs), which typically involve assigning a set of agents to an equally sized set of tasks, e.g., when optimizing the allocation of resources.
We show results on a collection of regularized LAPs in the form
\begin{align}\label{eq:LAP_defenition}
    \argmin_{\inA,\inB} \quad \underbrace{\textbf{tr}(A \cdot \perm(V,W))}_{\text{LAP term}} + \underbrace{\mu(\perm(V,W))}_{\text{regularizer}},
\end{align}
where $\perm(V,W) = s\left(2 \alpha \inA \inB^T\right)$ is a permutation and $A \in \mathbb{R}^{n \times n}$ is some given similarity matrix.
While the Softmax non-linearity ensures all rows sum to one, $\mu(\perm(V,W))$ is a regularization term enforcing columns summing to one as well, to satisfy the permutation constraints:

\begin{equation}\label{eq:perm_constraint} \textstyle
\mu(\perm) = %
\sum_j  \left( \sum_i \perm_{ij} - 1 \right)^2.
\end{equation}
Due to the row-wise Softmax all rows already sum to one but we incentive the columns to sum to one as well, as is necessary for permutations. 

\paragraph*{Dense Matrices.} We evaluate on a set of LAPs based on descriptor similarities of 3D shapes from the FAUST dataset of human scans~\cite{bogo2014faust}, with $n$ randomly chosen vertices per object~\cite{gao2021multi}.
Let $D_X, D_Y \in \mathbb{R}^{n \times k}$ be two k-dimensional point-wise descriptors of the shapes $X, Y$ corresponding to $n$ points. 
We use the SHOT \cite{salti2014shot} ($\mathbb{R}^{n \times 352}$) and the heat kernel signature \cite{sun2009hks} ($\mathbb{R}^{n \times 101}$) with their default parameters as descriptors and stack them together to comprise $D_{\cdot} \in \mathbb{R}^{n \times 453}$ in total, then $A = D_X \cdot D_Y^\top$.
Solving an LAP with this type of similariy matrix $A$ is used e.g. in \cite{vestner2017kernel} as the initialization strategy. 

We generate $100$ problem instances by pairing each of the $100$ shapes in FAUST with a random second shape to get the pair $X, Y$ and evaluate the relative error of the energy (restricted to solutions that were valid permutations), and the average Hamming distance to the next valid permutation (namely, the number of rows or columns that violate the permutation constraint). 
We ran the experiments with $n=100, m=30, \alpha=20$ and used a greedy heuristic to generate valid permutations from the results violating the permutation constraint (iteratively projecting the maximum value of the permutation to one, and the rest of the corresponding row and column to zero).
Out of the $100$ instances, $53$ lead to valid permutations without the heuristic.
The average relative error of immediately valid permutation is $1.8\%$ and after pseudo-projection of all instances it is $2.0\%$.
Due to the Softmax, every matrix has $100$ non-zero entries that are all nearly equal to one.
On average, the Hamming distance of invalid permutations to the next valid one is $1.38$ ($1.4\%$ of the problem size) which means in most cases one would have a valid permutation after only adjusting one entry.

\paragraph*{Sparse Matrices.} 
Given a matrix $A$ , that is sparsely populated and only contains non-zero entries in a subset $S=\{ (i,j)| A_{i,j} \neq 0\}$, we compute and optimize the permutation matrix sparsely in $(i,j) \in S$ 
by calculating the matrix factorization only at the required entries, similar to Section \ref{sec:impl_details}, but without restricting the number of entries per row of $P$ to two. Also we take into account random entries $(q,r) \notin S$.
We ran experiments for $A$ with a matrix density of  $|S|=0.01n^2$ for $n=1000,5000 \text{ and } 10000$ and $m=20$ with increasing $\alpha$ from $1$ to $20$ iteratively and measure a
Hamming distance of at most $0.28 \%$ of the problem size. To get a valid permutation matrix, we used the same heuristic as in the dense case  and measured a relative error below $7.8\%$, compared to the Hungarian algorithm. Also, we could measure a memory reduction by over $65\%$.

\begin{figure}[!tbp]
  \centering
  \begin{minipage}[t]{0.33\textwidth}
  \centering
    \definecolor{colornet}{rgb}{0.00000, 0.00000, 0.60000}%
\definecolor{colorfull}{rgb}{0.00000, 0.60000, 0.00000}%
\definecolor{color3}{rgb}{0.60000, 0.00000, 0.00000}%
\definecolor{color4}{rgb}{0.00000, 0.60000, 0.60000}%
\definecolor{color5}{rgb}{0.60000, 0.00000, 0.60000}%

\begin{tikzpicture}

\begin{axis}[%
width=.7\linewidth,
height=.5\linewidth,
scale only axis,
xmin=0,
ymode=log,
grid=both,
grid style={line width=.1pt, draw=gray!50},
xlabel style={font=\color{white!15!black}},
x tick label style = {font=\footnotesize},
y tick label style = {font=\footnotesize},
x label style={at={(axis description cs:0.4,0.04)}, anchor=north},
xlabel={$n$},
ylabel = { \small Mem. Cons. in GB},
y label style={at={(axis description cs:0.1,.5)},rotate=0,anchor=south},
axis background/.style={fill=white},
legend style={nodes={scale=0.8, transform shape}},
        legend cell align=left,
        smooth,
        fill=white, fill opacity=0.6, text opacity=1
]

\addplot[ mark=*, mark options={draw=colornet, fill=colornet}, mark size= 1pt, draw=colornet] table[row sep=crcr]{%
x	y\\
100 0.002048 \\ 
1000 0.002048 \\ 
2000 0.002048 \\ 
3000 0.002048 \\ 
4000 0.004096 \\ 
5000 0.004096 \\ 
6000 0.006144 \\ 
7000 0.006144 \\ 
8000 0.008192 \\ 
9000 0.008192 \\ 
10000 0.01024 \\ 
11000 0.01024 \\ 
12000 0.01024 \\ 
13000 0.01024 \\ 
14000 0.022528 \\ 
15000 0.022528 \\ 
16000 0.022528 \\ 
17000 0.022528 \\ 
18000 0.022528 \\ 
19000 0.022528 \\ 
20000 0.024576 \\ 
21000 0.024576 \\ 
22000 0.024576 \\ 
23000 0.024576 \\ 
24000 0.024576 \\ 
25000 0.024576 \\ 
26000 0.024576 \\ 
27000 0.024576 \\ 
28000 0.045056 \\ 
29000 0.045056 \\ 
30000 0.045056 \\ 
31000 0.045056 \\ 
32000 0.045056 \\ 
33000 0.045056 \\ 
34000 0.045056 \\ 
35000 0.045056 \\ 
36000 0.045056 \\ 
37000 0.045056 \\ 
38000 0.047104 \\ 
39000 0.047104 \\ 
40000 0.047104 \\ 
41000 0.047104 \\ 
42000 0.047104 \\ 
43000 0.047104 \\ 
44000 0.047104 \\ 
45000 0.047104 \\ 
46000 0.047104 \\ 
47000 0.047104 \\ 
48000 0.067584 \\ 
49000 0.067584 \\ 
50000 0.067584 \\ 
51000 0.067584 \\ 
52000 0.067584 \\ 
53000 0.067584 \\ 
54000 0.067584 \\ 
55000 0.067584 \\ 
56000 0.067584 \\ 
57000 0.067584 \\ 
58000 0.067584 \\ 
59000 0.067584 \\ 
60000 0.067584 \\ 
61000 0.067584 \\ 
62000 0.067584 \\ 
63000 0.067584 \\ 
64000 0.067584 \\ 
65000 0.067584 \\ 
66000 0.069632 \\ 
67000 0.069632 \\ 
68000 0.069632 \\ 
69000 0.069632 \\ 
70000 0.069632 \\ 
71000 0.069632 \\ 
72000 0.069632 \\ 
73000 0.069632 \\ 
74000 0.069632 \\ 
75000 0.069632 \\ 
76000 0.110592 \\ 
77000 0.110592 \\ 
78000 0.11264 \\ 
79000 0.11264 \\ 
80000 0.110592 \\ 
81000 0.110592 \\ 
82000 0.110592 \\ 
83000 0.11264 \\ 
84000 0.11264 \\ 
85000 0.11264 \\ 
86000 0.110592 \\ 
87000 0.110592 \\ 
88000 0.11264 \\ 
89000 0.11264 \\ 
90000 0.11264 \\ 
91000 0.11264 \\ 
92000 0.11264 \\ 
93000 0.11264 \\ 
94000 0.11264 \\ 
95000 0.11264 \\ 
96000 0.11264 \\ 
97000 0.11264 \\ 
98000 0.11264 \\ 
99000 0.11264 \\ 
100000 0.114688 \\ 
};
\addplot[ mark=*, mark options={draw=colorfull, fill=colorfull}, mark size= 1pt, draw=colorfull] table[row sep=crcr]{%
x	y\\
100 0.002048 \\ 
1000 0.022528 \\ 
2000 0.083968 \\ 
3000 0.186368 \\ 
4000 0.321536 \\ 
5000 0.495616 \\ 
6000 0.710656 \\ 
7000 0.946 \\ 
8000 1.236 \\ 
9000 1.556 \\ 
10000 1.918 \\ 
11000 2.318 \\ 
12000 2.758 \\ 
13000 3.24 \\ 
14000 3.762 \\ 
15000 4.322 \\ 
16000 4.912 \\ 
17000 5.542 \\ 
18000 6.202 \\ 
19000 6.912 \\ 
20000 7.652 \\ 
21000 8.442 \\ 
22000 9.262 \\ 
23000 10.112 \\ 
24000 11.012 \\ 
25000 11.952 \\ 
26000 12.922 \\ 
27000 13.932 \\ 
28000 14.982 \\ 
29000 16.072 \\ 
30000 17.192 \\ 
31000 18.352 \\ 
32000 19.562 \\ 
};

\addlegendentry{sparse}
\addlegendentry{dense}
legend style={at={(0,-100.5)},anchor=west}
\end{axis}

\end{tikzpicture}%
    \caption{Memory consumption for point cloud alignment
        \label{fig:point_registration}}
  \end{minipage}
  \hfill
  \begin{minipage}[t]{0.66\textwidth}
  \centering
    \definecolor{mycolor1}{rgb}{0.00000, 0.60000, 0.00000}%
\begin{tikzpicture}

\begin{axis}[%
width=.35\linewidth,
height=.25\linewidth,
at={(0.758in,0.481in)},
scale only axis,
xmin=0,
ymin=0,
ymax=0.6,
grid=both,
grid style={line width=.1pt, draw=gray!50},
xtick = {12, 40, 64, 100, 150},
xlabel style={font=\color{white!15!black}},
x tick label style = {font=\footnotesize},
y tick label style = {font=\footnotesize},
x label style={at={(axis description cs:0.5,0.04)}, anchor=north},
xlabel={\small Instance Size},
ylabel style={font=\color{white!15!black}},
ylabel={ \small Relative Error},
y label style={at={(axis description cs:0.2,.5)},rotate=0,anchor=south},
axis background/.style={fill=white},
]
\addplot[only marks, mark=*, mark options={fill=mycolor1}, fill, mark size=2.0pt, draw=black!30!mycolor1] table{%
x  y
90 0.207497397409747
15 0.0420654414048163
16 0
150 -0.0503297637715503
80 0.20404208572891
64 0.0280840636059157
50 0.0113539368366805
25 0.00194515759442492
20 0.166231505657093
19 0.294498176562819
30 0.0120851563333689
25 0.00213675213675214
20 0.021011673151751
32 0
16 0.0588235294117647
16 0
20 0
12 0.558626465661642
40 0.0160441372312766
60 0.0310649598838716
26 0.000952870855276311
100 -0.0265283200513451
12 0.077553802522113
22 0.191033138401559
50 0.0723802563078139
30 0.156247936285711
15 0.568210262828536
26 0.00985045342373164
30 0.00555192684519922
40 0.0862417359636474
60 0.0388155108045057
16 0.00745341614906832
16 0.142857142857143
20 0.334323292320747
60 0.0101755302281333
18 0.109517601043025
70 0.00908957026302613
12 0.0207612456747405
15 0.487873888439774
15 0.00869565217391304
16 0.230769230769231
100 -0.0695809587663426
14 0.029585798816568
16 0
80 0.0346248144414911
30 0.0204128092275004
27 0.0187237294612151
15 0.018618596959409
24 0.00114678899082569
40 0
16 0.0258064516129032
20 0.356751824817518
17 0.0482420111750018
100 -0.0281759698485931
16 0
100 -0.0180573778252695
20 0.00751227968795146
22 0.0161290322580645
12 0.0978032473734479
80 0.00804123304172673
100 -0.0574466125445718
16 0
100 0.0440798661724477
32 -0.797840269966254
81 -0.0258027648959318
18 0.106505676698504
30 -0.127433821920805
50 0.17296677364234
20 0.0378773017646507
36 0.0485032970794176
26 0.00379040752638225
12 0.0989769000427777
100 -0.0747604605598647
26 0.00392441579237837
26 0.00545699518996546
26 0.00449428470867033
18 0.000373273609555804
60 0.191564340799203
150 0.00569576291977575
90 0.0078196489476749
12 0.285765507350305
30 0.171563319787959
80 -0.013309618683297
22 0.148853729415563
20 0.0273561755884759
15 0.0852561595619867
90 -0.0223830214482317
};

\end{axis}
\end{tikzpicture}%
    \definecolor{mycolor1}{rgb}{0.00000, 0.60000, 0.00000}%
\begin{tikzpicture}

\begin{axis}[%
width=.35\linewidth,
height=.25\linewidth,
at={(0.758in,0.481in)},
scale only axis,
xmin=0,
ymin=0,
grid=both,
grid style={line width=.1pt, draw=gray!50},
xtick = {12, 40, 64, 100, 150},
xlabel style={font=\color{white!15!black}},
x tick label style = {font=\footnotesize},
y tick label style = {font=\footnotesize},
x label style={at={(axis description cs:0.5,0.04)}, anchor=north},
xlabel={ \small Instance Size},
ylabel style={font=\color{white!15!black}},
ylabel={ \small Runtime in Sec.},
y label style={at={(axis description cs:0.2,.5)},rotate=0,anchor=south},
axis background/.style={fill=white},
]
\addplot[only marks, mark=*, mark options={fill=mycolor1}, fill, mark size=2.0pt, draw=black!30!mycolor1] table{%
x	y
90 1442.46585965157
15 21.6270885467529
16 21.7824952602386
150 10815.5356087685
80 960.677081346512
64 407.166689634323
50 145.622968673706
25 24.3189704418182
20 22.5130889415741
19 22.3655817508698
30 30.2580587863922
25 24.3296601772308
20 22.8182778358459
32 31.8976752758026
16 21.8436396121979
16 21.8782675266266
20 22.6432144641876
12 20.4518513679504
40 61.7051064968109
60 318.239206790924
26 24.6964631080627
100 2293.78922748566
12 20.6244349479675
22 23.2520067691803
50 146.156038284302
30 29.3514175415039
15 22.0652923583984
26 25.6351211071014
30 31.4353511333466
40 61.2287142276764
60 318.764275074005
16 22.0553381443024
16 22.0460960865021
20 22.8959896564484
60 318.930606126785
18 22.5233271121979
70 574.709892749786
12 21.0991325378418
15 22.3346455097198
15 22.410026550293
16 22.4794127941132
100 2291.95836734772
14 22.065258026123
16 22.5012216567993
80 965.43284034729
30 31.1895346641541
27 25.1184298992157
15 21.5565037727356
24 23.6183428764343
40 60.645644903183
16 21.6710629463196
20 22.4000844955444
17 22.0619440078735
100 2277.51474499702
16 22.7776539325714
100 2281.36911416054
20 22.8527336120605
22 23.5169150829315
12 21.3596692085266
80 965.125685453415
100 2263.8715775013
16 22.5454783439636
100 2272.81606841087
32 30.9148547649384
81 1010.84697580338
18 23.2401657104492
30 32.434047460556
50 146.558495521545
20 23.3162605762482
36 44.8751046657562
26 25.4121587276459
12 21.0417692661285
100 2270.44675922394
26 24.6044888496399
26 24.4784572124481
26 25.0829477310181
18 22.388222694397
60 317.829624176025
150 10221.4431014061
90 1533.61711144447
12 21.1573421955109
30 31.785254240036
80 968.663019895554
22 24.4189376831055
20 23.6538648605347
15 22.490008354187
90 1495.94621038437
};

\end{axis}
\end{tikzpicture}%
    \caption{Relative error and runtime on QAPLIB dataset\label{fig:qaplib:results}}
  \end{minipage}
\end{figure}

\subsection{Quadratic Assignment Problems}\label{subsec:qap}

Quadratic Assignment Problems (QAPs) is a broadly employed mathematical tool for many real-life problems in operations research such as circuit design and shape matching. We demonstrate the application of our approach to non-convex QAPs of the form
\begin{equation} \label{eq:qap}
   \argmin_{\inA,\inB} p(V,W)^T A\ p(V,W)
\end{equation}
where $p(V,W) = \text{vec}(s\left(2\alpha \inA \inB^T\right))$  is the vectorized version of the permutation and $A \in \mathbb{R}^{n^2 \times n^2}$ is a cost matrix. $\inA$ and $\inB$ are normalized.
 The permutation matrix was optimized in a convex-concave manner, by optimizing the objective function
 \begin{equation} \label{eq:qap-id}
   \argmin_{\inA,\inB} p(V,W)^T (A-\beta I)\ p(V,W) + \mu(\perm(V,W))
\end{equation}
with $\beta$ being iteratively increased from $-\Vert A \Vert_2$ to $\Vert A \Vert_2$ and with
 $\mu(P(V,W))$ being the same permutation constraint regularizer as in \eqref{eq:perm_constraint}.

We show results on the QAPLIB \cite{burkard1997qaplib} library of quadratic assignment problems of real-world applications which range between $n=12$ and $n=256$ and we choose $m=\text{ceil}(\frac{n}{3})$.
The problems in the dataset are meant to be challenging and optimal solutions for some of the larger problems are not provided because they are unknown. 
Thus, we report the gap to optimality (when known) of our solution
and consider a solution to be good if it falls within $10\%$ of the optimum.
We report the relative error and runtime in Fig.~\ref{fig:qaplib:results}.
In $75$ out of $87$ instances the result was a proper permutation matrix.

\subsection{Shape Matching \label{sec:shape_matching}}
Finally, we further assess the effectiveness of our approach for the the application of non-rigid shape matching, a common task in computer graphics and computer vision.
To this end, we incorporate our permutation matrix representation approach into
the state-of-the-art shape-matching approach by Marin \etal \cite{marin2020correspondence},
which learns the point correspondences using two consecutive networks $N_\theta$ and $G_\theta$, predicting shape embeddings and probe functions, respectively. We propose to replace the calculation of the permutation matrix based on the output of the first network $N_\theta$  by $s\left(\alpha \inA \inB^T \right)$, with $\alpha = 40$. 
The network transforms the vertices of 3D objects $X_x$ and $X_y$ into embeddings $\phi_x = N_\theta(X_x)$ and $\phi_y = N_\theta(X_y)$, which are used to compute 
$
            V = \phi_x (\phi_x^{\dagger} \perm_{gt} \phi_y)
$
 and $W = \phi_y$. $V$ here replaces a transformed embedding.
The network is trained on the modified loss function
\begin{align}
    \min_\theta \sum_l \|s\left(2\alpha \inA \inB^T \right)^l X_y^l - \perm_{gt}^l X_{y}^l\|_2^2 \label{eq:shape_matching_loss}
\end{align}
for a given ground truth permutation $\perm_{gt}$, and $V$ and $W$ being normalized row-wise. 
Similar to Marin \etal, we train the networks over $1600$ epochs on $10000$ shapes of the SURREAL dataset \cite{varol17_surreal} and evaluate our experiments on $100$ noisy and noise-free objects of different shapes and poses of the FAUST dataset \cite{bogo2014faust}, that are provided by \cite{marin2020correspondence} in \cite{maringithub}. 

We follow the evaluation of Marin \etal~\cite{marin2020correspondence} and calculate the geodesic distance between the ground truth matching and the predicted matching $match_1 = \mathcal{N}(\phi_x C_1^T ,\phi_y )$
for $ C_1 = ((\phi_y^{\dagger}G_\theta(X_y))^T)^\dagger (\phi_x^{\dagger}G_\theta(X_x))^T$ whereby $\mathcal{N}$ is the nearest neighbor.
In the following, we refer to the measured geodesic distance as $\errorOne$.
A second error ($\errorTwo$) which only concerns the first network's predictions, is measured by the geodesic distance towards $match_{2} = \mathcal{N}(\phi_x,\phi_y C)$
for $C = \phi_y^{\dagger}\perm_{gt}\phi_x$, which is, again, calculated following Marin \etal~\cite{marin2020correspondence}. %
The results of our experiments are reported in Table \ref{tab:noise}, showing the average geodesic errors (over $10$ runs for each experiment) for the approach presented in \cite{marin2020correspondence} and our method. The table reveals improved results compared to~\cite{marin2020correspondence}.

\begin{table}[]
\caption{Geodesic errors and standard deviation (\textit{std}) for noise-free and noisy data by Marin \etal \cite{marin2020correspondence} and our approach \\}
\label{tab:noise}
  \centering
\begin{tabular}{llllll}
\toprule 
& $\errorOne$ & \textit{std} ($\errorOne$)         & $\errorTwo$  & \textit{std} ($\errorTwo$)& \\ \midrule
\cite{marin2020correspondence} &$ 0.051 $&$17.4 \times 10^{-4}$ & $0.029$ &$3.5\times 10^{-4}$ & \\
ours & $0.047$ &$26.9\times 10^{-4}$& $0.026$ &$29.2\times 10^{-4}$ &
\multirow{-2}{*}{noisy} \\ 
\midrule
\cite{marin2020correspondence}   &$ 0.043$ &$16.3\times 10^{-4}$  &$ 0.022$ &$3.5\times 10^{-4}$ &  \\
ours &$ 0.041 $&$8.1\times 10^{-4}$ &$ 0.019 $&$3.7\times 10^{-4}$& 
\multirow{-2}{*}{noise-free} \\ 
\bottomrule
\end{tabular}
\end{table}

\paragraph*{Stochastic Training. }
Given that the explicit calculation of the permutation matrix in \eqref{eq:shape_matching_loss} is memory-intensive for a large number of vertices, we employ stochastic training to avoid the need for computing the full permutation matrix. As we describe in Sec.~\ref{sec:impl_details} we only calculate the loss over a few entries where the final permutation ought to be equal to one and on $k$ (here $k$ can be $\geq 1$) randomly chosen entries of each row of $\perm$ in each iteration. %
This approach reduces the memory requirement and gives us the possibility to train with larger shapes consisting of more vertices.
In our experiments, we applied the stochastic training technique on the SURREAL dataset, and then evaluated the performance on FAUST by measuring the error rates for varying values of $k$, as depicted in Fig.\;\ref{fig:sparse_faust}.
We observed a small relative increase of less than $17\%$ in $\errorTwo$, and also a small effect on $\errorOne$, but with a less clear tendency as one could see for $\errorTwo$. For $\errorOne$ we measured an average standard deviation of $2.25 \times 10^{-3}$ and for $\errorTwo$ of $3.3 \times 10^{-4}$.
Two noise-free examples of correspondences, visualized for full training and for stochastic training with $k=1$, are shown in Fig.\;\ref{fig:faust}, with the reference image on the left and the corresponding shapes on the right.

 \begin{figure}
    \centering
     \begin{subfigure}[b]{0.45\textwidth}
     \centering
     \center{
\begin{tabular}{m{0.25\textwidth} m{0.25\textwidth} m{0.25\textwidth}}
         \includegraphics[width=0.25\textwidth,trim={9cm 4cm 9cm 3.8cm},clip] {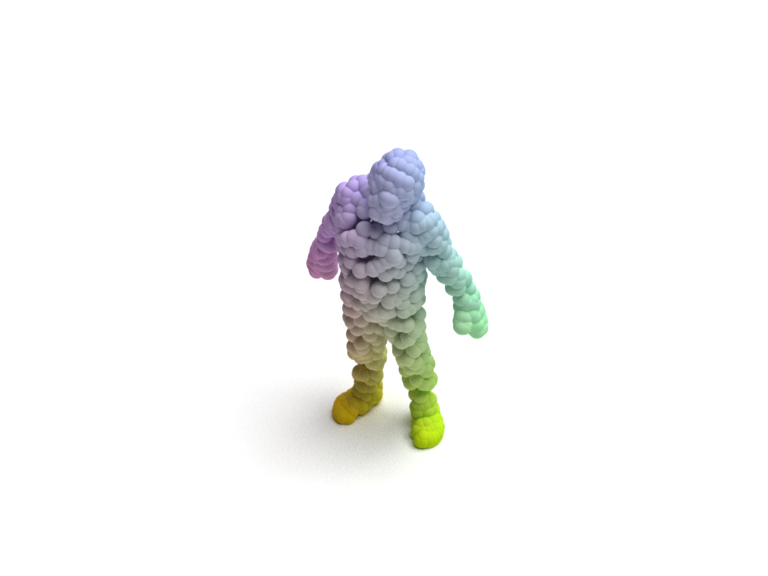}&
         \includegraphics[width=0.25\textwidth,trim={9cm 4cm 9cm 3.8cm},clip]{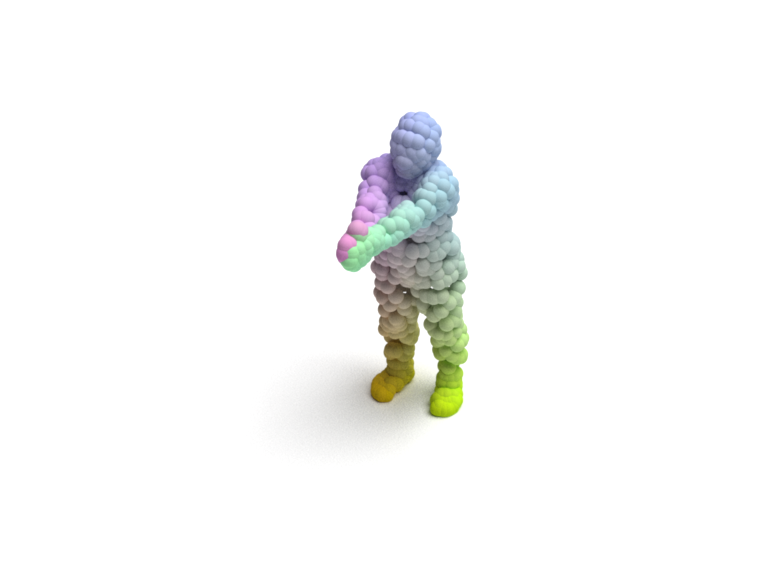}&
         \includegraphics[width=0.25\textwidth,trim={9cm 4cm 9cm 3.8cm},clip]{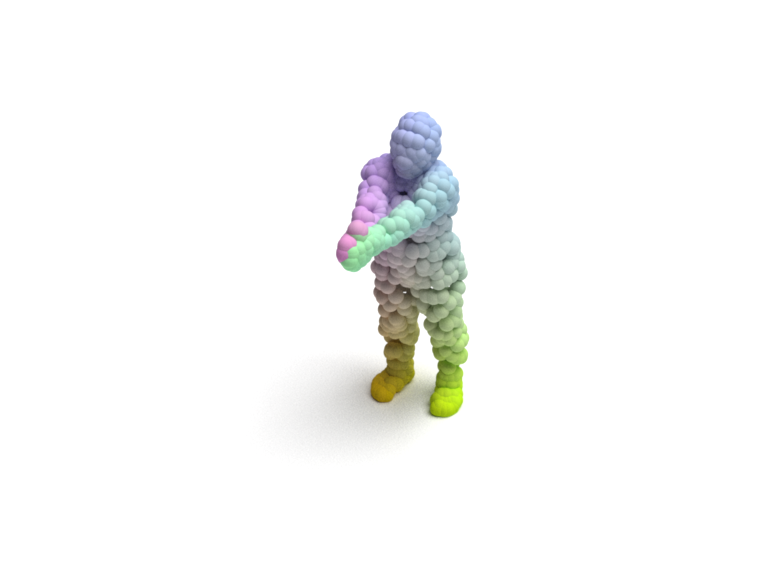}
         \\

         \includegraphics[width=0.25\textwidth,trim={9cm 4cm 9cm 3.8cm},clip]{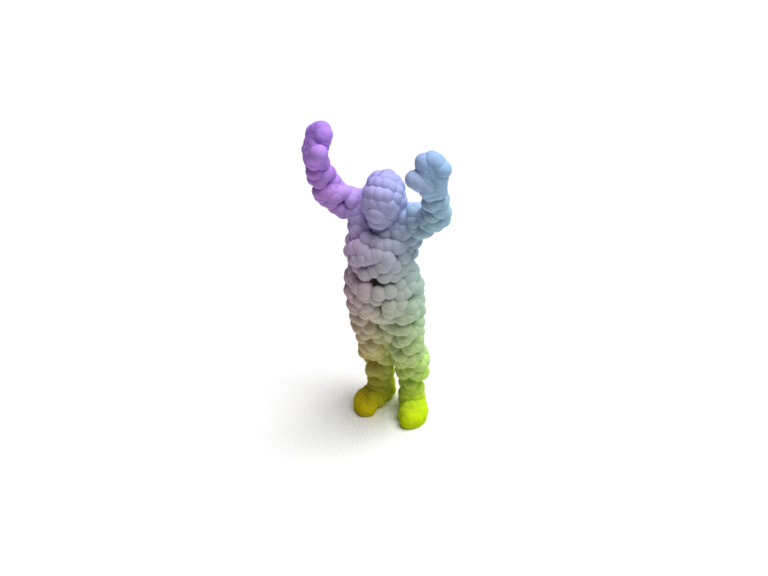}&
         \includegraphics[width=0.25\textwidth,trim={9cm 4cm 9cm 3.8cm},clip]{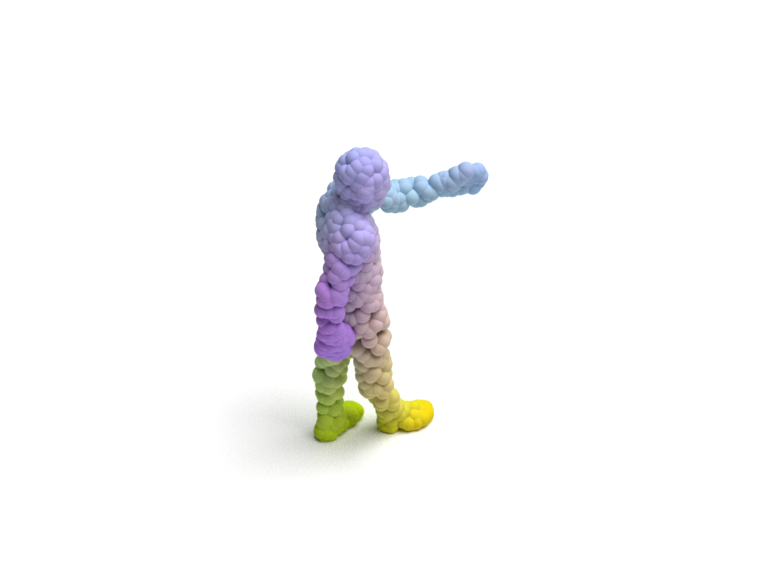}&
         \includegraphics[width=0.25\textwidth,trim={9cm 4cm 9cm 3.8cm},clip]{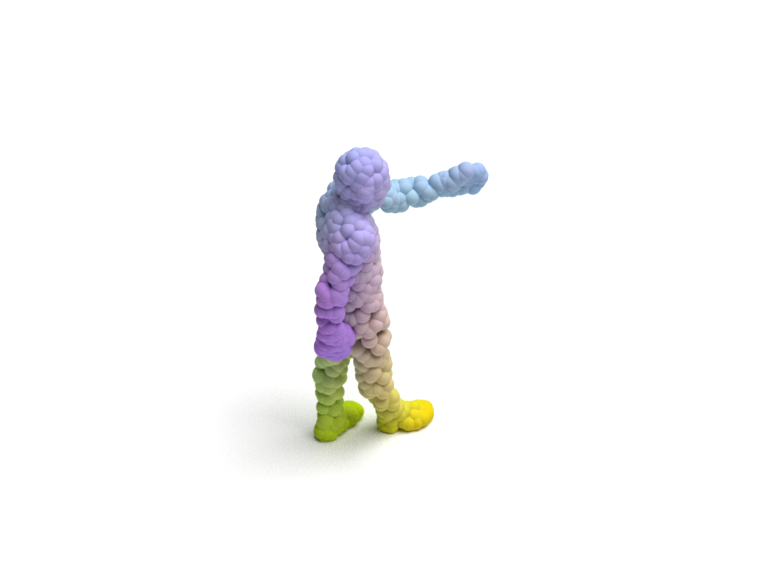}
         \\
         \center{ref.}&\center{full (ours)}&\center{$k=1$}
\end{tabular}
}
        \caption{Visualized results on FAUST \label{fig:faust}}   
        
     \end{subfigure}
     \begin{subfigure}[b]{0.45\textwidth}
     \centering
    \definecolor{colornet}{rgb}{0.00000, 0.00000, 0.60000}%
\definecolor{colorsparsegsd}{rgb}{0.00000, 0.60000, 0.00000}%
\definecolor{colorfull}{rgb}{0.60000, 0.00000, 0.00000}%
\definecolor{color4}{rgb}{0.00000, 0.60000, 0.60000}%
\definecolor{color5}{rgb}{0.60000, 0.00000, 0.60000}%

\begin{tikzpicture}

\begin{axis}[%
xmode = linear,
width=.5\linewidth,
height=.3\linewidth,
at={(0.758in,0.481in)},
scale only axis,
grid=both,
grid style={line width=.1pt, draw=gray!50},
xtick = {1, 100, 400, 600},
xlabel style={font=\color{white!15!black}},
x tick label style = {font=\footnotesize},
xticklabels =  {1, 100, 400, full},
y tick label style = {font=\footnotesize},
x label style={at={(axis description cs:0.5,0.04)}, anchor=north},
xlabel={ $k$},
ylabel style={font=\color{white!15!black}},
ylabel={\small  $\errorOne$},
y label style={at={(axis description cs:0.15,.5)},rotate=0,anchor=south},
axis background/.style={fill=white},
legend style={nodes={scale=0.8, transform shape}},
        legend pos=outer north east,
        legend cell align=left,
        smooth
]

\addplot[only marks, mark=*, mark options={fill=colornet}, mark size= 2pt, draw=black!30!colornet] table[row sep=crcr]{%
x	y\\
600 0.040515 \\ 
300	0.042821	\\
200	0.042801	\\
100	0.040077	\\
500	0.041281	\\
400	0.042905	\\
50	0.041818	\\
10	0.043280	\\
1	0.041571	\\
};

\addplot[only marks, mark=*, mark options={fill=colorsparsegsd}, mark size= 2pt, draw=black!30!colorsparsegsd] table[row sep=crcr]{%
x	y\\
500	0.048469	\\
100	0.048135	\\
400	0.051750	\\
300	0.048475	\\
200	0.050786	\\
50	0.047938	\\
10	0.049977	\\
1	0.049119	\\
 600 0.046862 \\ 
};

\end{axis}

\begin{axis}[%
xmode = linear,
yshift=1.95cm,
width=.5\linewidth,
height=.3\linewidth,
at={(0.758in,0.481in)},
scale only axis,
grid=both,
grid style={line width=.1pt, draw=gray!50},
xtick = {1, 100, 400, 600},
xlabel style={font=\color{white!15!black}},
x tick label style = {font=\footnotesize},
xticklabels =  {},
y tick label style = {font=\footnotesize},
x label style={at={(axis description cs:0.5,0.04)}, anchor=north},
ylabel style={font=\color{white!15!black}},
ylabel={\small  $\errorTwo$},
y label style={at={(axis description cs:0.15,.5)},rotate=0,anchor=south},
axis background/.style={fill=white},
legend style={nodes={scale=0.8, transform shape}},
        legend pos=outer north east,
        legend cell align=left,
        smooth
]

\addplot[only marks, mark=*, mark options={fill=colornet}, mark size= 2pt, draw=black!30!colornet] table[row sep=crcr]{%
x	y\\
300		0.018998	\\
200		0.019609	\\
100		0.019628	\\
500		0.019702	\\
400	    0.019778	\\
50		0.019866	\\
10		0.021193	\\
1		0.022113	\\
600     0.018917 \\ 
};

\addplot[only marks, mark=*, mark options={fill=colorsparsegsd}, mark size= 2pt, draw=black!30!colorsparsegsd] table[row sep=crcr]{%
x	y\\
500	0.027581	\\
100	0.027640	\\
400	0.027643	\\
300	0.027661	\\
200	0.027938	\\
50	0.028081	\\
10	0.028941	\\
1	0.029667	\\
600 0.026479 \\ 
};

\addlegendentry{noise free}
\addlegendentry{noisy}
legend style={at={(0,-100.5)},anchor=west}
\end{axis}

\end{tikzpicture}%
        \caption{Error values for different $k$ \label{fig:sparse_faust}}
     \end{subfigure}
    \caption{Visualized matching results (a) and error values (b) for the FAUST dataset for different levels of sparseness $k$ during stochastic training}
    
\end{figure}

To evaluate the impact on the error and memory consumption when dealing with objects of larger size (consisting of more vertices), we ran further experiments using data from the TOSCA dataset \cite{bronstein2008numerical}.
We trained for $400$ epochs on the objects of the classes \textit{victoria} and \textit{michael} ($32$ objects in total) where up to $20000$ vertices were sampled. These experiments revealed
a reduction of memory consumption
as shown in Fig.\;\ref{fig:sparse:results}.
We evaluated the training on the class $david$ of TOSCA and reported a relative memory-error trade-off for up to $20000$ samples of each object
in Fig.~\ref{fig:memory_tradeoff}.
The graph indicates a correlation between higher memory usage and lower error values for $\errorTwo$. 
The trends observed in the memory-error trade-off for $\errorTwo$ are generally applicable to $\errorOne$ as well, although with some noticeable outliers.

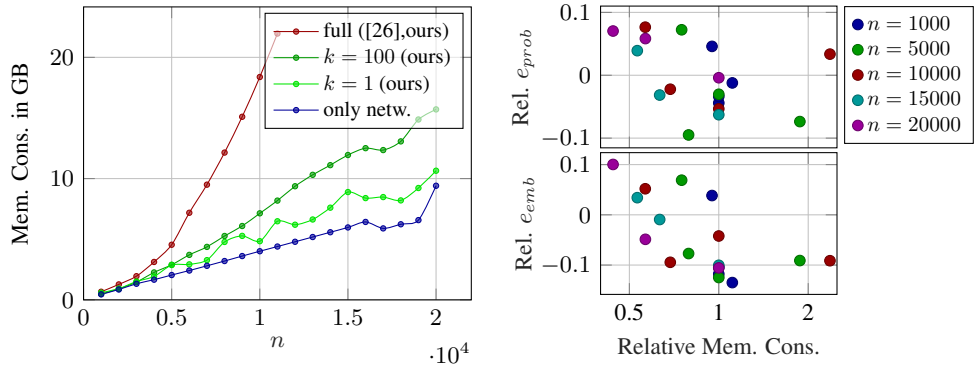
\begin{figure}
    \centering
     \begin{subfigure}[t]{0.45\textwidth}
     \centering
    \definecolor{colornet}{rgb}{0.00000, 0.00000, 0.60000}%
\definecolor{colorfull}{rgb}{0.60000, 0.00000, 0.00000}%
\definecolor{colorsparsegsd}{rgb}{0.00000, 0.60000, 0.00000}%
\definecolor{colorsparsegsdo}{rgb}{0.00000, 0.90000, 0.00000}%
\definecolor{color4}{rgb}{0.00000, 0.60000, 0.60000}%
\definecolor{color5}{rgb}{0.60000, 0.00000, 0.60000}%

\begin{tikzpicture}

\begin{axis}[%
width=.82\linewidth,
height=.62\linewidth,
at={(0.758in,0.481in)},
scale only axis,
xmin=0,
ymin=0,
grid=both,
grid style={line width=.1pt, draw=gray!50},
xlabel style={font=\color{white!15!black}},
x tick label style = {font=\footnotesize},
y tick label style = {font=\footnotesize},
x label style={at={(axis description cs:0.5,0.04)}, anchor=north},
xlabel={\small $n$},
ylabel={\small Mem. Cons. in GB},
y label style={at={(axis description cs:0.12,.5)},rotate=0,anchor=south},
legend style={nodes={scale=0.8, transform shape}},
        legend cell align=left,
        smooth,
        fill=white, fill opacity=0.6, text opacity=1
]

\addplot[ mark=*, mark options={draw=colorfull, fill=colorfull}, mark size= 1pt, draw=colorfull] table[row sep=crcr]{%
x	y\\
 1000 0.6696960000000001 \\ 
 2000 1.274 \\ 
 3000 1.952 \\ 
 4000 3.126 \\ 
 5000 4.546 \\ 
 6000 7.182 \\ 
 7000 9.5 \\ 
 8000 12.146 \\ 
 9000 15.098 \\ 
10000 18.378 \\ 
11000 21.962 \\ 
};

\addplot[ mark=*, mark options={draw=colorsparsegsd, fill=colorsparsegsd}, mark size= 1pt, draw=colorsparsegsd] table[row sep=crcr]{%
x	y\\
 1000 0.48128 \\ 
 2000 0.9 \\ 
 3000 1.466 \\ 
 4000 2.262 \\ 
 5000 2.894 \\ 
 6000 3.71 \\ 
 7000 4.368 \\ 
 8000 5.268 \\ 
 9000 6.09 \\ 
10000 7.136 \\ 
11000 8.186 \\ 
12000 9.37 \\ 
13000 10.312 \\ 
14000 11.108 \\ 
15000 11.946 \\ 
16000 12.51 \\ 
17000 12.352 \\ 
18000 13.068 \\ 
19000 14.874 \\ 
20000 15.706 \\ 
};
\addplot[ mark=*, mark options={draw=colorsparsegsdo, fill=colorsparsegsdo}, mark size= 1pt, draw=colorsparsegsdo] table[row sep=crcr]{%
x	y\\
 1000 0.591872 \\ 
 2000 0.926 \\ 
 3000 1.512 \\ 
 4000 1.93 \\ 
 5000 2.872 \\ 
 6000 2.928 \\ 
 7000 3.276 \\ 
 8000 4.776 \\ 
 9000 5.276 \\ 
10000 4.838 \\ 
11000 6.482 \\ 
12000 6.196 \\ 
13000 6.628 \\ 
14000 7.602 \\ 
15000 8.89 \\ 
16000 8.392 \\ 
17000 8.476 \\ 
18000 8.202 \\ 
19000 9.224 \\ 
20000 10.65 \\ 
};

\addplot[ mark=*, mark options={draw=colornet, fill=colornet}, mark size= 1pt, draw=colornet] table[row sep=crcr]{%
x	y\\
 1000 0.452608 \\ 
 2000 0.854 \\ 
 3000 1.328 \\ 
 4000 1.664 \\ 
 5000 2.048 \\ 
 6000 2.418 \\ 
 7000 2.808 \\ 
 8000 3.202 \\ 
 9000 3.612 \\ 
10000 4.01 \\ 
11000 4.398 \\ 
12000 4.79 \\ 
13000 5.182 \\ 
14000 5.576 \\ 
15000 5.972 \\ 
16000 6.426 \\ 
17000 5.894 \\ 
18000 6.234 \\ 
19000 6.576 \\ 
20000 9.41 \\ 
};

\addlegendentry{full (\cite{marin2020correspondence},ours)}
\addlegendentry{$k=100$ (ours)}
\addlegendentry{$k=1$ (ours)}
\addlegendentry{only netw.}
\end{axis}

\end{tikzpicture}%
    \caption{Memory usage during non-stochastic training (red), stochastic training for different $k$ (green), and just calculating the network output (blue)}
    \label{fig:sparse:results}
    \end{subfigure}
    \hspace{0.2cm}
     \begin{subfigure}[t]{0.45\textwidth}
     \centering
    \definecolor{color1}{rgb}{0.00000, 0.00000, 0.60000}%
\definecolor{color2}{rgb}{0.00000, 0.60000, 0.00000}%
\definecolor{color3}{rgb}{0.60000, 0.00000, 0.00000}%
\definecolor{color4}{rgb}{0.00000, 0.60000, 0.60000}%
\definecolor{color5}{rgb}{0.60000, 0.00000, 0.60000}%

\begin{tikzpicture}

\begin{axis}[%
width=.5\textwidth,
height=.3\textwidth,
at={(0.758in,0.481in)},
scale only axis,
grid=both,
xmode=log,
grid style={line width=.1pt, draw=gray!50},
xlabel style={font=\color{white!15!black}},
x tick label style = {font=\footnotesize},
y tick label style = {font=\footnotesize},
x label style={at={(axis description cs:0.5,0.04)}, anchor=north},
y tick label style = {font=\footnotesize},
ytick = {-0.1,0,0.1},
ymax=0.11,
xtick = {0.5,1,2},
xticklabels =  {},
ylabel style={font=\color{white!15!black}},
ylabel={\small Rel. $\errorOne$},
xmin = 0.4,
xmax = 2.5,
y label style={at={(axis description cs:0.15,.5)},rotate=0,anchor=south},
axis background/.style={fill=white},
legend style={nodes={scale=0.8, transform shape}},
        legend pos=outer north east,
        legend cell align=left,
        smooth
]

\addplot[only marks, mark=*, mark options={draw=black!30!color1, fill=color1}, mark size= 2pt, draw=black!30!color1] table[row sep=crcr]{
 x	y \\
0.948936170212766   0.04589795227129096  \\ %
1.0   -0.04412209504857134  \\ %
1.0   -0.03374405112046995  \\ %
1.1106382978723404   -0.012200385327818072  \\ %
nan   0.05335739481056421  \\ %
};
\addplot[only marks, mark=*, mark options={draw=black!30!color2, fill=color2}, mark size= 2pt, draw=black!30!color2] table[row sep=crcr]{
 x	y \\
0.7500000000000001   0.07229313074549916  \\ %
0.7914364640883977   -0.09497900487487393  \\ %
1.0   -0.030274751406454217  \\ %
1.875   -0.07370986364119912  \\ %
};
\addplot[only marks, mark=*, mark options={draw=black!30!color3, fill=color3}, mark size= 2pt, draw=black!30!color3] table[row sep=crcr]{
 x	y \\
0.5655829596412555   0.07646404903302645  \\ %
0.6866591928251121   -0.022247205864867597  \\ %
1.0   -0.05346667923453825  \\ %
2.367152466367713   0.03357711875543661  \\ %
};
\addplot[only marks, mark=*, mark options={draw=black!30!color4, fill=color4}, mark size= 2pt, draw=black!30!color4] table[row sep=crcr]{
 x	y \\
0.5317261007868743   0.03909939204876126  \\ %
0.6325129750544115   -0.03151122660342041  \\ %
1.0   -0.0627688533720485  \\ %
};
\addplot[only marks, mark=*, mark options={draw=black!30!color5, fill=color5}, mark size= 2pt, draw=black!30!color5] table[row sep=crcr]{
 x	y \\
0.44066717596129357   0.07032005832326876  \\ %
0.5662083015024192   0.05830633326866892  \\ %
1.0   -0.003983793703339676  \\ %
};

\addlegendentry{$n = 1000$}
\addlegendentry{$n = 5000$}
\addlegendentry{$n = 10000$}
\addlegendentry{$n = 15000$}
\addlegendentry{$n = 20000$}
legend style={at={(0,-100.5)},anchor=west}
\end{axis}

\begin{axis}[%
width=.5\textwidth,
height=.3\textwidth,
at={(0.758in,0.481in)},
scale only axis,
yshift=-1.95cm,
xmax = 2.5,
xmin = 0.4,
xmode=log,
grid=both,
grid style={line width=.1pt, draw=gray!50},
xlabel style={font=\color{white!15!black}},
x tick label style = {font=\footnotesize},
y tick label style = {font=\footnotesize},
x label style={at={(axis description cs:0.5,0.04)}, anchor=north},
xlabel={\small Relative Mem. Cons.},
xtick = {0.5, 1 ,2},
xticklabels = {0.5, 1 ,2},
ytick = {-0.1,0,0.1},
ylabel style={font=\color{white!15!black}},
ylabel={\small Rel. $\errorTwo$},
y label style={at={(axis description cs:0.15,.5)},rotate=0,anchor=south},
axis background/.style={fill=white},
legend style={nodes={scale=0.8, transform shape}},
        legend pos=outer north east,
        legend cell align=left,
        smooth
]

\addplot[only marks, mark=*, mark options={draw=black!30!color1, fill=color1}, mark size= 2pt, draw=black!30!color1] table[row sep=crcr]{
 x	y \\
0.948936170212766   0.038428436824327415  \\ %
1.0   -0.11782416145409856  \\ %
1.0   -0.10438794644250296  \\ %
1.1106382978723404   -0.1349039952175519  \\ %
nan   -0.11435823640704557  \\ %
};
\addplot[only marks, mark=*, mark options={draw=black!30!color2, fill=color2}, mark size= 2pt, draw=black!30!color2] table[row sep=crcr]{
 x	y \\
0.7500000000000001   0.06899063031071308  \\ %
0.7914364640883977   -0.0770665136920835  \\ %
1.0   -0.12482272249539844  \\ %
1.875   -0.09104103015781995  \\ %
};
\addplot[only marks, mark=*, mark options={draw=black!30!color3, fill=color3}, mark size= 2pt, draw=black!30!color3] table[row sep=crcr]{
 x	y \\
0.5655829596412555   0.05191253609423983  \\ %
0.6866591928251121   -0.09450536175265888  \\ %
1.0   -0.0421416918162585  \\ %
2.367152466367713   -0.09135207797584696  \\ %
};
\addplot[only marks, mark=*, mark options={draw=black!30!color4, fill=color4}, mark size= 2pt, draw=black!30!color4] table[row sep=crcr]{
 x	y \\
0.5317261007868743   0.03418430063334617  \\ %
0.6325129750544115   -0.009322194896235324  \\ %
1.0   -0.10044899202493936  \\ %
};
\addplot[only marks, mark=*, mark options={draw=black!30!color5, fill=color5}, mark size= 2pt, draw=black!30!color5] table[row sep=crcr]{
 x	y \\
0.44066717596129357   0.10031338210660469  \\ %
0.5662083015024192   -0.048752479126254133  \\ %
1.0   -0.10564567968014699  \\ %
};

legend style={at={(0,-100.5)},anchor=west}
\end{axis}

\end{tikzpicture}%
    \caption{Relative memory - error trade-off for a varying number of vertices $(n)$}
    \label{fig:memory_tradeoff}
    \end{subfigure}
    \caption{(a) shows the memory consumption during training for shape matching for objects with a varying number of vertices $(n)$ and varying sparseness $(k)$. (b) shows the relative memory-error trade-off for varying sparseness (which causes the memory reduction), whereby the memory consumption is relative to $k=100$ and the errors are relative to full training by \cite{marin2020correspondence} for $n=1000$.}
\end{figure}

\section{Conclusion}
In this work, we proposed a strategy to represent permutation matrices by a low-rank matrix factorization followed by a nonlinearity and showed that by using the Kissing number theory, we can determine the minimum rank necessary for representing a permutation matrix of a given size, allowing for a more memory-efficient representation.  We validated this method with experiments on LAPs and QAPs as well as a real-world shape matching application and showed improvements in the latter. Additionally, we explored the potential of optimizing permutations stochastically to decrease memory usage, which opens the possibility of handling high-resolution data.

\paragraph{Limitations and Broader Impact}

Our method offers a promising solution to contribute positively to the environment by reducing the computational cost of a variety of problems involving permutation matrices. We do not see any ethical concerns associated with our approach itself. However, it is important to acknowledge a limitation of our method. For certain problem formulations, such as the Koopmans and Beckmann form QAPs, stochastic learning may not be feasible because the double occurrences of the permutation matrix make the stochastic computation not applicable. Moreover, our method requires devising a non-trivial, problem-specific adaptation.

\bibliographystyle{plain}
\bibliography{egbib}

\appendix

\definecolor{colornet}{rgb}{0.00000, 0.00000, 0.60000}%
\definecolor{colorsparsegsd}{rgb}{0.00000, 0.60000, 0.00000}%
\definecolor{colorfull}{rgb}{0.60000, 0.00000, 0.00000}%
\definecolor{color4}{rgb}{0.00000, 0.60000, 0.60000}%
\definecolor{color5}{rgb}{0.60000, 0.00000, 0.60000}%

\section{Handling Large Permutation Matrices}
Following our shape-matching experiments described in Sec. 4.5, we further visualize in Fig.~\ref{fig:sparse_est} how the proposed approach enables handling large problems that would have been infeasible otherwise.
The dashed red curve added to this figure (on top of the curves presented in Fig. 5a of the paper) corresponds to the estimated  memory the would have been required to accommodate full permutation matrices, as a function of problem size $n$. While our approach can accurately handle large problems with as much as $n=20,000$ vertices (green curves), running the equivalent experiments without it (red curves) would require prohibitively large amounts of memory ($\sim73.6~\text{GB}$, vs. $10.7~\text{GB}$ using $k=1$). For estimating memory values (dashed curve) we assume memory usage follows a $c\cdot n^2$ curve, and estimate the value for $c$ based on the full matrix experiments (solid red) we conducted for $n\leq11,000$.
\begin{figure}[!h]
    \centering
    \definecolor{colornet}{rgb}{0.00000, 0.00000, 0.60000}%
\definecolor{colorfull}{rgb}{0.60000, 0.00000, 0.00000}%
\definecolor{colorsparsegsd}{rgb}{0.00000, 0.60000, 0.00000}%
\definecolor{colorsparsegsdo}{rgb}{0.00000, 0.90000, 0.00000}%
\definecolor{color4}{rgb}{0.00000, 0.60000, 0.60000}%
\definecolor{color5}{rgb}{0.60000, 0.00000, 0.60000}%

\begin{tikzpicture}

\begin{axis}[%
width=.82\linewidth,
height=.42\linewidth,
at={(0.758in,0.481in)},
scale only axis,
xmin=0,
ymin=0,
grid=both,
grid style={line width=.1pt, draw=gray!50},
xlabel style={font=\color{white!15!black}},
x tick label style = {font=\footnotesize},
y tick label style = {font=\footnotesize},
x label style={at={(axis description cs:0.5,0.0)}, anchor=north},
xlabel={\small $n$},
ylabel={\small Memory consumprion (GB)},
legend style={nodes={scale=0.8, transform shape},at={(0.65,0.95)}},
        legend cell align=left,
        smooth,
        fill=white, fill opacity=0.6, text opacity=1
]

\addplot[ mark=*, mark options={draw=colorfull, fill=colorfull}, mark size= 1pt, draw=colorfull] table[row sep=crcr]{%
x	y\\
 1000 0.6696960000000001 \\ 
 2000 1.274 \\ 
 3000 1.952 \\ 
 4000 3.126 \\ 
 5000 4.546 \\ 
 6000 7.182 \\ 
 7000 9.5 \\ 
 8000 12.146 \\ 
 9000 15.098 \\ 
10000 18.378 \\ 
11000 21.962 \\ 
};

\addplot[dashed, mark=*, mark options={draw=colorfull, fill=colorfull}, mark size= 1pt, draw=colorfull] table[row sep=crcr]{%
x	y\\
1000 0.184 \\
2000 0.736 \\
3000 1.655 \\
4000 2.942 \\
5000 4.597 \\
6000 6.620 \\
7000 9.011 \\
8000 11.769 \\
9000 14.895 \\
10000 18.389 \\
11000 22.251 \\
12000 26.480 \\
13000 31.078 \\
14000 36.043 \\
15000 41.376 \\
16000 47.076 \\
17000 53.145 \\
18000 59.581 \\
19000 66.385 \\
20000 73.557 \\
};

\addplot[ mark=*, mark options={draw=colorsparsegsd, fill=colorsparsegsd}, mark size= 1pt, draw=colorsparsegsd] table[row sep=crcr]{%
x	y\\
 1000 0.48128 \\ 
 2000 0.9 \\ 
 3000 1.466 \\ 
 4000 2.262 \\ 
 5000 2.894 \\ 
 6000 3.71 \\ 
 7000 4.368 \\ 
 8000 5.268 \\ 
 9000 6.09 \\ 
10000 7.136 \\ 
11000 8.186 \\ 
12000 9.37 \\ 
13000 10.312 \\ 
14000 11.108 \\ 
15000 11.946 \\ 
16000 12.51 \\ 
17000 12.352 \\ 
18000 13.068 \\ 
19000 14.874 \\ 
20000 15.706 \\ 
};
\addplot[ mark=*, mark options={draw=colorsparsegsdo, fill=colorsparsegsdo}, mark size= 1pt, draw=colorsparsegsdo] table[row sep=crcr]{%
x	y\\
 1000 0.591872 \\ 
 2000 0.926 \\ 
 3000 1.512 \\ 
 4000 1.93 \\ 
 5000 2.872 \\ 
 6000 2.928 \\ 
 7000 3.276 \\ 
 8000 4.776 \\ 
 9000 5.276 \\ 
10000 4.838 \\ 
11000 6.482 \\ 
12000 6.196 \\ 
13000 6.628 \\ 
14000 7.602 \\ 
15000 8.89 \\ 
16000 8.392 \\ 
17000 8.476 \\ 
18000 8.202 \\ 
19000 9.224 \\ 
20000 10.65 \\ 
};

\addlegendentry{Full $P$ matrix}
\addlegendentry{Full $P$ matrix - estimate}
\addlegendentry{Our representation, stochastic training, $k=100$}
\addlegendentry{Our representation, stochastic training, $k=1$}
\end{axis}

\end{tikzpicture}
    \caption{
    \textbf{Memory savings.} Memory usage when training the shape matching network of [26] with different permutation matrix representations: Using full matrices (red) vs. using our stochastic training scheme with different sparseness levels (green). 
    See text for details.
    }
    \label{fig:sparse_est}
\end{figure}
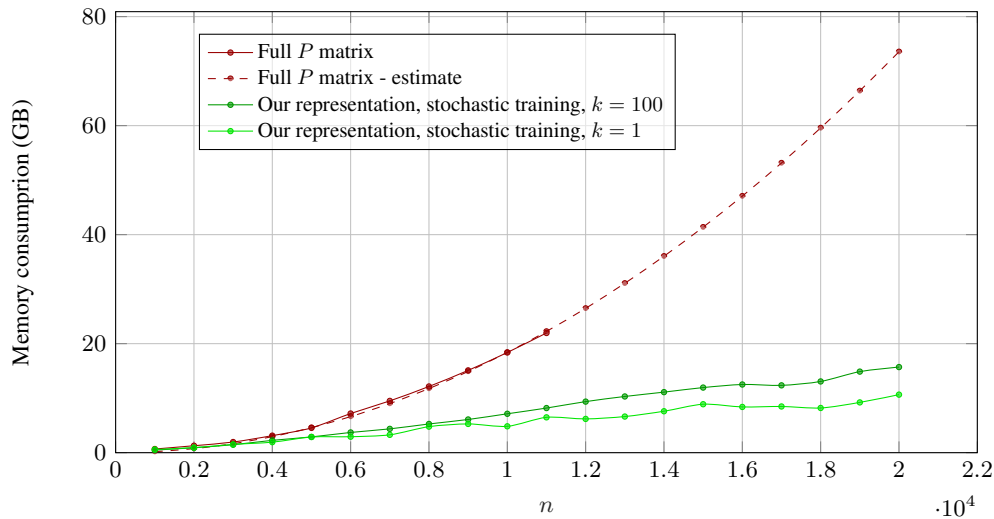

\end{document}